\providecommand{\algorithmname}{Algorithm}
\theoremstyle{plain}
\newtheorem{thm}{\protect\theoremname}
\theoremstyle{definition}
\newtheorem{defn}{\protect\definitionname}
\theoremstyle{definition}
\newtheorem{problem}{\protect\problemname}
\theoremstyle{plain}
\newtheorem{lem}{\protect\lemmaname}
\theoremstyle{definition}
\newtheorem{example}{\protect\examplename}
\theoremstyle{remark}
\newtheorem{rem}{\protect\remarkname}
\theoremstyle{plain}
\providecommand{\corollaryname}{Corollary}
\providecommand{\definitionname}{Definition}
\providecommand{\examplename}{Example}
\providecommand{\lemmaname}{Lemma}
\providecommand{\problemname}{Problem}
\providecommand{\remarkname}{Remark}
\providecommand{\theoremname}{Theorem}
\newcommand{\UntilOp}{\mathcal{U}}
\newcommand{\Eventually}{\diamondsuit}
\newcommand{\Always}{\square}
\newcommand{\Next}{\Circle}
\newcommand{\AP}{{AP}}
      \theoremstyle{plain}
      \theoremstyle{plain}
\def\BibTeX{{\rm B\kern-.05em{\sc i\kern-.025em b}\kern-.08em
		T\kern-.1667em\lower.7ex\hbox{E}\kern-.125emX}}
\begin{document}

\title{Overcoming Exploration: Deep Reinforcement Learning for Continuous Control in Cluttered Environments from Temporal Logic Specifications


\thanks{Manuscript received: September, 9, 2022; December, 24, 2022; Accepted February, 1, 2023. (Corresponding author: Mingyu Cai)}
\thanks{This paper was recommended for publication by Editor Lucia Pallottino upon evaluation of the Associate Editor and Reviewers' comments.}
\thanks{
This work was parially supported by the Under Secretary of Defense for Research and Engineering under Air Force Contract No. FA8702-15-D-0001 at Lehigh University and by the NSF under grant IIS-2024606 at Boston University.}

\thanks{$^{1}$Mingyu Cai and Cristian-Ioan Vasile are with Mechanical Engineering, Lehigh University, Bethlehem, PA, 18015 USA.
        {\tt\footnotesize mic221@lehigh.edu,  
crv519@lehigh.edu}}
\thanks{$^{2}$Erfan Asai and Calin Belta are with Mechanical Engineering Department, Boston University, Boston, MA 02215, USA.
        {\tt\footnotesize eaasi@bu.edu,  
cbelta@bu.edu}}
}

\author{Mingyu Cai$^{1}$, Erfan Aasi$^{2}$, Calin Belta$^{2}$, Cristian-Ioan Vasile$^{1}$

\thanks{Digital Object Identifier (DOI): see top of this page.}}

\markboth{IEEE Robotics and Automation Letters. Preprint Version. February, 2023}
{Cai \MakeLowercase{\textit{et al.}}: DRL for Continuous Control in Cluttered Environments from Temporal Logic Specifications} 

\maketitle

\begin{abstract}
Model-free continuous control for robot navigation tasks using Deep Reinforcement Learning (DRL) that relies on noisy policies for exploration is sensitive to the density of rewards. 
In practice, robots are usually deployed in cluttered environments, containing many obstacles and narrow passageways. Designing dense effective rewards is challenging, resulting in exploration issues during training. 
Such a problem becomes even more serious when tasks are described using temporal logic specifications. This work presents a deep policy gradient algorithm for controlling a robot with unknown dynamics operating in a cluttered environment when the task is specified as a Linear Temporal Logic (LTL) formula. To overcome the environmental challenge of exploration during training, we propose a novel path planning-guided reward scheme by integrating sampling-based methods to effectively complete goal-reaching missions. To facilitate LTL satisfaction, our approach decomposes the LTL mission into sub-goal-reaching tasks that are solved in a distributed manner.
Our framework is shown to significantly improve performance (effectiveness, efficiency) and exploration of robots tasked with complex missions in large-scale cluttered environments. 
A video demonstration can be found on YouTube Channel: \url{https://youtu.be/yMh_NUNWxho}.

\end{abstract}

\begin{IEEEkeywords}
Formal Methods in Robotics and Automation, Deep Reinforcement Learning, Sampling-based Method
\end{IEEEkeywords}


\section{INTRODUCTION}

Model-free Deep Reinforcement Learning (DRL) employs neural networks to find optimal policies for unknown dynamic systems via maximizing long-term rewards \cite{mnih2015human}.
In principle, DRL offers a method to learn such policies based on the exploration vs. exploitation trade-off \cite{sutton2018reinforcement}, but the efficiency of the required exploration has prohibited its usage in real-world robotic navigation applications due to natural sparse rewards. To effectively collect non-zero rewards, existing DRL algorithms simply explore the environments, using noisy policies and goal-oriented reward schemes. As the environment becomes cluttered and large-scale, naive exploration
strategies and standard rewards become less effective resulting in local optima. This problem becomes even more challenging for complex and long-horizon robotic tasks. Consequently, the desired DRL-based approaches for robotic target-driven tasks are expected to have the capability of guiding the exploration during training toward task satisfaction. 

\noindent
\textbf{Related Work:}
For exploration in learning processes, many prior works \cite{schulman2015trust,lillicrap2015continuous,schulman2017proximal,lowe2017multi, haarnoja2018soft} employ noise-based exploration strategies integrated with different versions of DRLs, whereas their sampling efficiency relies mainly on the density of specified rewards. 
Recent works \cite{hester2018deep,nair2018overcoming} leverage human demonstrations to address exploration issues for robotic manipulation. On the other hand, the works in \cite{vecerik2017leveraging,fujimoto2019off} focus on effectively utilizing the dataset stored in the reply buffer to speed up the training. However, these advances assume a prior dataset is given beforehand, and they can not be applied to learn from scratch. In cluttered environments containing dense obstacles and narrow passageways, their natural rewards are sparse, resulting in local optimal behaviors and failure to reach destinations.

The common problem is how to generate guidance for robot navigation control in cluttered environments.
Sampling-based planning methods, such as Rapidly-exploring Random Tree (RRT)~\cite{lavalle2001randomized}, RRT*~\cite{karaman2011sampling} and Probabilistic Road Map (PRM)~\cite{lavalle2004relationship}, find collision-free paths over continuous geometric spaces. In robotic navigation, they are typically integrated with path-tracking control approaches such as Model Predictive Control (MPC)~\cite{qin2003survey}, control contraction metrics~\cite{manchester2017control} and learning Lyapunov-barrier functions~\cite{dawson2022safe}.  While these methods are effective, their control designs are model-based. Model-free path tracking, which is the focus of this paper, is still an open problem.

Furthermore, this work also considers complex navigation tasks instead of simple goal-reaching requirements.
 Motivated by task-guided planning and control, formal languages are shown to be efficient tools for expressing a diverse set of high-level specifications \cite{baier2008}. For unknown dynamics, temporal logic-based rewards are developed and integrated with various DRL algorithms. In particular,
deep Q-learning is employed in~\cite{Icarte2018,Camacho2019,hasanbeig2019deepsynth, xu2020joint} 
 over discrete action-space.
 For continuous state-action spaces, 
 the authors in \cite{Li2019,vaezipoor2021ltl2action, icarte2022reward} utilize actor-critic algorithms, e.g., proximal policy optimization (PPO) \cite{schulman2017proximal} for policy optimization, validated in robotic manipulation and safety tasks, respectively. All aforementioned works only study LTL specifications over finite horizons. To facilitate defining LTL tasks over infinite horizons, recent works \cite{Cai2021modular, cai2021safe} improve the results from~\cite{hasanbeig2020deep} by converting LTL into a novel automaton called E-LDGBA, a variant of the Limit Deterministic Generalized B\"uchi Automaton (LDGBA) \cite{Sickert2016}. To improve the performance for the long-term (infinite horizon) satisfaction, the authors propose a modular architecture of Deep Deterministic Policy Gradient (DDPG) \cite{lillicrap2015continuous} to decompose the global missions into sub-tasks. 
However, none of the existing works can address large-scale, cluttered environments, since an LTL-based reward requires the RL-agent
to visit the regions of interest towards the LTL satisfaction. Such sparse rewards can not tackle challenging environments. Sampling-based methods and reachability control synthesis for LTL satisfaction are investigated in~\cite{vasile2020reactive, kantaros2020stylus, srinivasan2020control, luo2021abstraction}.
All assume known system dynamics. In contrast, our paper proposes a model-free approach for LTL-based navigation control in cluttered environments.

\vspace{0.1cm}
\noindent
\textbf{Contributions:}   Intuitively, the most effective way of addressing the environmental challenges of learning is to optimize the density of rewards such that the portion of transitions with positive rewards in the reply buffer is dramatically increased. 
To do so, we bridge the gap between sampling-based planning and model-free DPGs to solve standard reachability problems. In particular, we develop a novel exploration guidance technique using geometric RRT* \cite{karaman2011sampling}, to design the rewards. We then propose a distributed DRL framework for LTL satisfaction by decomposing a global complex and long-horizon task into individual reachability sub-tasks.

Moreover, we propose an augmentation method to address the non-Markovianity of the reward design process. Due to unknown dynamics, we overcome the infeasibility of the geometric RRT* guidance. Our algorithm is validated through case studies to demonstrate its performance increase compared to the distance and goal-oriented baselines. We show that our method employing geometric path planning guidance achieves significant training improvements for DRL-based navigation in cluttered environments, where the tasks can be expressed using LTL formulas.


\textbf{Organization:} Sec.~\ref{sec:Preliminary} introduces basic concepts of robot dynamics, Markov Decision Processes (MDP) that capture the interactions between robot and environment, RL for solving learning-based control for the MDP model, and LTL for defining robot navigation specifications. In Sec.~\ref{sec:Problem}, we define the problem and emphasize the challenges.
Sec.~\ref{sec:exploration_Solution} presents our approach for addressing simple goal-reaching tasks via reward design using sampling-based methods over the workspace.
In Sec.~\ref{sec:LTL_Solution}, we show how to use and extend the proposed approach to general LTL mission specifications.
The performance of the proposed method is shown in Sec.~\ref{sec:experiment}.

\section{Preliminaries}
\label{sec:Preliminary}

The evolution of a continuous-time dynamic system $\mathcal{S}$ starting from an initial state $s_{0}\in S_{0}\subset S$ is given by
\begin{equation}\label{eqn:robot}
    \dot{s}=f\left(s, a\right),
\end{equation}
where $s\in S\subseteq \mathbb{R}^{n}$ is the state vector in the compact set $S$, $a\in A\subseteq \mathbb{R}^{m}$ is the control input, and $S_0$ is the initial set. The function $f:\mathbb{R}^{n}\times \mathbb{R}^{m}\rightarrow\mathbb{R}^{n}$ is locally Lipschitz continuous and unknown. 

Consider a robot with the unknown dynamics \eqref{eqn:robot}, operating in an environment $Env$ that is represented by a compact subset $X\subset\mathbb{R}^{d}, d\in \left\{2,3\right\}$ as a workspace of the robot.
The relation between $\mathcal{S}$
and $X$ is defined by the projection $Proj:S\shortrightarrow X$. 
The space $X$ contains regions of interest that are labeled by a set of atomic propositions $\AP$. We use $2^{\AP}$ to represent the power set of $\AP$. We denote $L_X:X\shortrightarrow2^{\AP}$ to label regions in the workspace. Let $L:S\shortrightarrow2^{\AP}$ be the induced labeling function over $\mathcal{S}$ and we have $L(s) = L_X(Proj(s))$. 
Note that $S$ represents the robot state space that can be high-dimensional while $X$ is the workspace it is deployed in, i.e., two or three-dimensional Euclidean space.

\textbf{Reinforcement Learning:\space}The interactions between environment $Env$ and the unknown dynamic system $\mathcal{S}$ with the state-space $S$ can be captured by a continuous labeled-MDP (cl-MDP) \cite{thrun2002}. A cl-MDP is a tuple $\mathcal{M}=(S, S_{0}, A,p_{S},\AP,L, R, \gamma)$,
where $S\subseteq\mathbb{R}^{n}$ is the continuous state space, $S_{0}$ is the set of initial states, $A\subseteq\mathbb{R}^{m}$
is the continuous action space, $p_{S}$ represents the unknown system dynamics as a distribution, $AP$ is the set of atomic propositions, $L:S\shortrightarrow2^{\AP}$ is the labeling function, $R:S\times A\times S\shortrightarrow\mathbb{R}$ is the reward function, and $\gamma\in(0,1)$ is the discount factor.
The distribution $p_{S}:\mathfrak{B}\left(\mathbb{R}^{n}\right)\times A\times S\shortrightarrow\left[0,1\right]$
is a Borel-measurable conditional transition kernel, s.t. $p_{S}\left(\left.\cdot\right|s,a\right)$
is the probability measure of the next state given current $s\in S$ and $a\in A$ over the Borel
space $\left(\mathbb{R}^{n},\mathfrak{B}\left(\mathbb{R}^{n}\right)\right)$,
where $\mathfrak{B}\left(\mathbb{R}^{n}\right)$ is the set of all
Borel sets on $\mathbb{R}^{n}$.

Let $\pi(a|s)$ denote a policy that is either deterministic, i.e., $\pi: S \shortrightarrow A$, or stochastic, i.e., $\pi: S\times A\shortrightarrow [0,1]$, which maps states to distributions over actions.
At each episode, the initial state of the robot in $Env$ is denoted by $s_{0}\in S_{0}$. At each time step $t$, the agent  observes the state $s_{t}\in S$ and executes an action $a_{t}\in A$, according to the policy $\pi(a_{t}| s_{t})$, and $Env$ returns the next state $s_{t+1}$ sampled from $p_{S}(s_{t+1}|s_{t}, a_{t})$. The process is repeated until the episode is terminated.
The objective of the robot is to learn an optimal policy $\pi^{*}(a| s)$ that maximizes the expected discounted return $J(\pi)=\mathbb{E}^{\pi}\left[\stackrel[k=0]{\infty}{\sum}\gamma^{k}\cdot R(s_{k},a_{k},s_{k+1})\right]$ under the policy $\pi$.

\textbf{Linear Temporal Logic (LTL):\space} An LTL formula is built 
to describe high-level specifications of a system. Its ingredients are a set of atomic propositions, and combinations of Boolean and temporal operators. The syntax of LTL formulas is defined:
\begin{equation*}
        \phi   :=  \text{true} \, | \, a \, | \, \phi_1 \land \phi_2 \, | \, \lnot \phi_1 | \Next\phi \, | \, \phi_1 \UntilOp \phi_2\:, 
\end{equation*}
where $a\in\AP$ is an atomic proposition, \emph{true}, \emph{negation} $\lnot$, and \emph{conjunction} $\land$ are propositional logic operators, and \emph{next} $\Next$ and \emph{until} $\UntilOp$ are temporal operators. 
Alongside the standard operators introduced above, other propositional logic operators, such as \emph{false}, \emph{disjunction} $\lor$, and \emph{implication} $\rightarrow$, and temporal operators, such as \emph{always} $\Always$ and \emph{eventually} $\Eventually$, are derived from the standard operators.

For a infinite word $o$ starting from the step $0$, let $o_t, t\in\mathbb{N}$ denotes the value at step $t$. The semantics of an LTL formula are interpreted over words, where a word is an
infinite sequence $o=o_{0}o_{1}\ldots$, with $o_{i}\in2^{\AP}$ for
all $i\geq0$.
The satisfaction of an LTL formula $\phi$ by the word $o$ is denote by $o\models\phi$.
More details about LTL formulas can be found in \cite{baier2008}.

In this work, we restrict our attention to LTL formulas that exclude the \emph{next} temporal operator, which is not meaningful for continuous state-action space~\cite{kloetzer2008fully, luo2021abstraction}.



\section{Problem Formulation}
\label{sec:Problem}

Consider a cl-MDP $\mathcal{M}=\left(S, S_{0}, A,p_{S},\AP,L, R, \gamma \right)$.
The induced path under a policy $\pi=\pi_{0}\pi_{1}\ldots$ over $\mathcal{M}$ is $\boldsymbol{s}_{\infty}^{\pi}=s_{0}\ldots s_{i}s_{i+1}\ldots$, where $p_{S}(s_{i+1}|s_{i}, a_{i})>0$ if $\pi_{i}(a_{i}| s_{i})>0$.
Let $L\left(\boldsymbol{s}_{\infty}^{\pi}\right)=o_{0}o_{1}\ldots$
be the sequence of labels associated with $\boldsymbol{s}_{\infty}^{\pi}$,
such that $o_{i}= L(s_{i}), \forall i\in \left\{0, 1,2,\ldots\right\}$. We denote the satisfaction relation of the induced 
trace with $\phi$ by $L(\boldsymbol{s}_{\infty}^{\pi})\models\phi$. The probability of satisfying $\phi$ under the policy $\pi$, starting from
an initial state $s_{0}\in S_{0}$, is defined as
\[
{\Pr{}_{M}^{\pi}(\phi)=\Pr{}_{M}^{\pi}(L(\boldsymbol{s}_{\infty}^{\pi})\models\phi\,\big|\,\boldsymbol{s}_{\infty}^{\pi}\in\boldsymbol{S}_{\infty}^{\pi}),}
\]
where $\boldsymbol{S}_{\infty}^{\pi}$ is the set of admissible
paths from the initial state $s_{0}$, under the policy ${\pi}$, and the detailed computation of $\Pr{}_{M}^{\pi}(\phi)$ can be found in \cite{baier2008}.
The transition distributions $p_{S}$ of $\mathcal{M}$ are unknown due to the unknown dynamic $\mathcal{S}$, and DRL algorithms are employed to learn the optimal control policies.

In this paper, the cl-MDP $\mathcal{M}$ captures the interactions between a cluttered environment $Env$ with geometric space $X$, and an unknown dynamic system $\mathcal{S}$. Note that explicitly constructing a cl-MDP $\mathcal{M}$ is impossible, due to the continuous state-action space. We track any cl-MDP $\mathcal{M}$ on-the-fly (abstraction-free) using deep neural network, according to the evolution of the dynamic system $\mathcal{S}$ operating in $Env$.


\begin{problem}
\label{problem2}
Consider a set of labeled goal regions in $Env$ i.e., $\AP_{\mathcal{G}}=\left\{\mathcal{G}_1, \mathcal{G}_2,\ldots\right\}$. The safety-critical specification is expressed as $\phi=\Always\lnot\mathcal{O}\land\phi_{g}$ , where $\mathcal{O}$ denotes the atomic proposition for obstacles.The expression $\phi=\Always\lnot\mathcal{O}\land\phi_{g}$ requires the robot satisfying a general navigation task $\phi_{g}$, e.g., goal-reaching, while avoiding obstacles.
The objective is to synthesize the optimal policy $\pi^*$ of $\mathcal{M}$ satisfying the task $\phi$, i.e., $\Pr{}_{M}^{\pi}(\phi)>0$. 
\end{problem}

\textbf{Assumption 1. } Let $X_{free}$ denote the obstacle-free space. We assume that there exists at least one policy that drives the robot from initial states toward the regions of interest while always operating in $X_{free}$. This is reasonable since the assumption ensures the existence of policies satisfying a given valid LTL specification.

For Problem~\ref{problem2}, typical learning-based algorithms for target-driven problems only assign positive rewards when the robot reaches any goal region $X_{\mathcal{G}_{i}}$ toward the LTL satisfaction, resulting exploration issues of DRL rendered from the environmental challenge. This point is obvious even when considering the special case of goal-reaching tasks, i.e.,$\phi_{p}=\Always\lnot\mathcal{O}\land\Eventually\mathcal{G}_{i}$, where the sub-task $\Eventually\mathcal{G}_{i}$ requires to eventually visit the goal region $X_{\mathcal{G}_{i}}$ labeled as $\mathcal{G}_{i}$.

\begin{example}
\label{example1}
Consider an autonomous vehicle as an RL-agent with unknown dynamics that is tasked with specification $\phi$, shown in Fig.~\ref{fig:running_example} (a). For a goal-reaching task as a special LTL formula $\phi_{p}=\Always\lnot\mathcal{O}\land\Eventually\mathcal{G}_{1}$, if the RL-agent only receives a reward after reaching the goal region $\mathcal{G}_1$, it will be hard to effectively explore using data with positive rewards task and noisy policies. 
The problem becomes more challenging for specifications such as 
$\phi_{ex}=\Always\lnot\mathcal{O}\land\phi_{g, ex}=\Always\lnot\mathcal{O}\land\Always(\left(\Eventually\mathtt{\mathcal{G}_{1}}\land\Eventually\mathtt{\left(\mathcal{G}_{2}\land\Eventually\mathtt{\ldots\land \Eventually\mathcal{G}_{4}}\right)}\right)$
that require the robot to visit regions $\mathcal{G}_1, \mathcal{G}_2, \mathcal{G}_3, \mathcal{G}_4$ sequentially infinitely many times.
\end{example}

In Sec.~\ref{sec:exploration_Solution}, we show how to learn the control policy of completing a standard goal-reaching mission $\phi_{p}=\Always\lnot\mathcal{O}\land\Eventually\mathcal{G}_{i}$ in cluttered environments. Then,  Sec.~\ref{sec:LTL_Solution} builds upon and extends the approach
to solve a general safety-critical navigation task $\phi=\Always\lnot\mathcal{O}\land\phi_{g}$ in a distributed manner.

\begin{figure}[!t]
\begin{center}
\centerline{\includegraphics[width=\columnwidth]{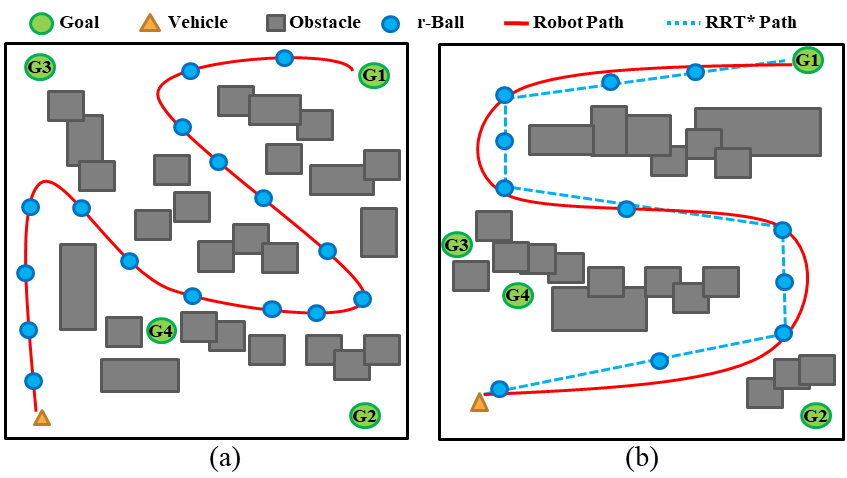}}
\caption{Consider an autonomous vehicle operating in a cluttered and large-scale environment. It is challenging to learn optimal policies as described in Example~\ref{example1}. The trajectories provide insights of reward design in Sec.~\ref{sec:exploration_Solution}.}
\label{fig:running_example}
\end{center}
 \vspace{-0.5cm}
\end{figure}

\section{Overcoming Exploration\label{sec:exploration_Solution}
}


In Sec.~\ref{subsec: RRT*}, we briefly introduce the geometric sampling-based algorithm to generate an optimal path for the standard goal-reaching tasks $\phi_{p}=\Always\lnot\mathcal{O}\land\Eventually\mathcal{G}_{i}$.
Subsequently, Sec.~\ref{subsec: reward} develops a novel dense reward to overcome the exploration challenges and provides rigorous analysis for learning performance.

\subsection{Geometric RRT*\label{subsec: RRT*}}

The standard optimal RRT* method \cite{karaman2011sampling} is a variant of RRT \cite{lavalle2001randomized}.
Both RRT and RRT* are able to handle the path planning in cluttered and large-scale environments. 
Due to its optimality, we choose RRT* over RRT, to improve the learned performance of the optimal policies. 
Since the dynamic system $\mathcal{S}$ is unknown, we use the geometric RRT* method that builds a tree $G=(V,E)$ incrementally in $X$, where $V$ is the set of vertices and $E$ is the set of edges. If $V$ intersects the goal region, we find a geometric trajectory to complete the task $\phi$.
The detailed procedure of geometric RRT* is described in the Appendix~\ref{append:RRT*}. 
Here, we briefly introduce two of the functions in the geometric RRT* method, which are used in explaining our method in the next sections.

{\em Distance and cost:} The function $dist: X\times X\rightarrow [0,\infty)$ is the metric that computes the geometric Euclidean distance between two states. The function $Cost: X\rightarrow [0,\infty)$ returns the length of the path in the tree between two input states.

{\em Steering:} Given two states $x$ and $x'$, the function $Steer$ returns a state $x_{new}$ such that $x_{new}$ lies on the geometric line connecting $x$ to $x'$, and its distance from $x$ is at most $\eta$, i.e., $dist(x, x')\leq\eta$, where $\eta$ is an user-specified parameter. In addition, the state $x_{new}$ must satisfy $dist(x_{new}, x')\leq dist(x, x')$.

Having the tree $G=(V,E)$ generated by the RRT* method, if there exists at least one node $x\in V$ that is located within the goal regions, we find the optimal state trajectory satisfying the task $\phi$ as a sequence of geometric states $\boldsymbol{x}^*=x_{0}x_{1}\ldots x_{N_{p}}$, where $x_{N_{p}}\in X_{\mathcal{G}}$ and $x_{i}\in V$, $\forall i=0,1\ldots,N_{p}$. 

\subsection{Sampling-based reward\label{subsec: reward}}

Here we use the optimal geometric trajectory $\boldsymbol{x}^*$ and the properties of the generated tree $G=(V, E)$ to synthesize the reward scheme.
First, let $x|\boldsymbol{x}^*$ denote a state of $\boldsymbol{x}^*$, and the total length of $\boldsymbol{x}^*$ in the tree $G$ be equal to $Cost(x_{N_{p}}|\boldsymbol{x}^*)$. We define the distance from each state $x\in\boldsymbol{x}^*$ to the destination $x_{N_{p}}$ as $Dist(x|\boldsymbol{x}^*) = Cost(x_{N_{p}}|\boldsymbol{x}^*)-Cost(x|\boldsymbol{x}^*)$.
Based on the distance, we design the RRT* reward scheme to guide the learning progress towards the satisfaction of $\phi$.
Reaching an exact state in the continuous state space is challenging for robots.
Thus, we define the norm $r$-ball for each state $x|\boldsymbol{x}^*$ to allow the robot to visit the neighboring area of the state as $Ball_{r}(x|\boldsymbol{x}^*)=\left\{x'\in X\mid dist(x|\boldsymbol{x}^*,x')\leq r\right\}$, where $x|\boldsymbol{x}^*$ is the center and $r$ is the radius. For simplicity, we select $r\leq\frac{\eta}{2}$ based on the steering function of the geometric RRT*, such that the adjacent r-balls along the optimal trajectory $\boldsymbol{x}^*$ do not intersect with each other.

We develop a progression function $D:X\rightarrow [0,\infty)$ to track whether the current state is getting closer to the goal region, by following the sequence of balls along $\boldsymbol{x}^*$ as:
\begin{equation}
D(x)=
\begin{cases}
Dist(x_{i}|\boldsymbol{x}^*) & \text{if } x\in Ball_{r}(x_{i}|\boldsymbol{x}^*)\\
\infty & \text{otherwise}
\end{cases}
\label{eq:progression}
\end{equation}
For the cl-MDP $\mathcal{M}$ capturing the interactions between $\mathcal{S}$ and $Env$, the intuition behind the sampling-based reward design is to assign a positive reward whenever the robot gets geometrically closer to the goal region, along the optimal path obtained by the RRT* (Alg.~\ref{alg:RRT*}).

During each episode of learning, a state-action sequence $s_{0}a_{0}s_{1}a_{i}\ldots s_{t}$ up to current time $t$ is projected into the state and action sequences  $\boldsymbol{s}_{t}=s_{0}s_{1}\ldots s_{t}$ and $\boldsymbol{a}_{t}=a_{0}a_{1}\ldots a_{t-1}$, respectively.
We define
\vspace{-4pt}
\begin{equation*}
D_{min}(\boldsymbol{s}_{t})=\underset{s\in \boldsymbol{s}_{t}}{\min}\left\{D(Proj(s))\right\} 
\end{equation*}
\vspace{-4pt}
as the progression energy that is equal to the minimum distance along the optimal path towards the destination, up to step $t$. The objective of the reward is to drive the robot such that $D_{min}(\boldsymbol{s}_{t})$ decreases. 
However, employing the function $D_{min}(\boldsymbol{s}_{t})$ for reward design that depends on the history of the trajectory results in a non-Markovian reward function~\cite{Icarte2018}, while the policy $\pi(s)$ only takes the current state as input and can not distinguish the progress achieved by the histories $\boldsymbol{s}_{t}$.

To address the issue, inspired by the product MPD~\cite{baier2008}, given the history $\boldsymbol{s}_{t}$, we keep tracking the index $i_{t}\in\left\{0,1\ldots,N_{p}\right\}$ of the center state of the visited $r$-ball regions $Ball_{r}(x_{i_{t}}|\boldsymbol{x}^*)$ with minimum distance $Dist(x_{i_{t}}|\boldsymbol{x}^*)$ deterministically, i.e.,
\begin{equation*}
x_{i_{t}}=Proj(s_{i_{t}}), \text{ where } s_{i_{t}}=\underset{s\in \boldsymbol{s}_{t}}{\arg\min}\left\{D(Proj(s))\right\}
\end{equation*}

If none of the $r$-balls are visited up to $t$, we set $i_{t}=0$. Then, the current state $s_{t}$ is embedded with the index $i_{t}$ as a product state $s^{\times}_{t}=(s_{t}, i_{t})$, which is considered as the input of the policy, i.e., $\pi(s^{\times}_{t})$. Note that we treat the embedded component $i_{t}$ as the state of a deterministic automaton~\cite{baier2008}. The relevant analysis can be found in Appendix~\ref{append:product}.

Let $R: s^{\times}\rightarrow \mathbb{R}$ denote the episodic reward function. We propose a novel scheme to assign the Markovian reward with respect to the product state $s^{\times}_{t}$ as: 
\begin{equation}
R(s^{\times}_{t})=\left\{ \begin{array}{cc}
R_{-}, & \text{if } Proj(s_{t})\in X_{\mathcal{O}},\\
R_{++}, & \text{if } D(Proj(s_{t}))=0,
\\
R_{+}, & \text{if } D(Proj(s_{t}))< D_{min}(\boldsymbol{s}_{t-1}),\\
0, & \text{otherwise,}
\end{array}\right.\label{eq:reward_function}
\end{equation}
where $R_{+}$ is a positive constant reward, $R_{++}$ is a boosted positive constant that is awarded when the robot reaches the destination, and $R_{-}$ is the negative constant reward that is assigned when the robot violates the safety task of $\phi$, i.e., $\phi_{safe}=\Always\lnot\mathcal{O}$. Note that if the robot crosses both obstacles and $r$-balls, it receives the negative reward $R_{-}$, which has first priority. This setting does not restrict selections of the parameter $r$ (radius of $r$-balls) for implementations.

\begin{example}
\label{example2}
As shown in Fig.~\ref{fig:running_example} (a), we apply the RRT* method to generate the optimal trajectory (shown in red) in a challenging environment. Then, we construct the sequence of $r$-balls along it and apply the reward design~\eqref{eq:reward_function} to guide the learning and overcome exploration difficulties.
\end{example}

\begin{rem}
Since geometric RRT* does not consider the dynamic system, the optimal path $\boldsymbol{x}^*$ may be infeasible for the robot to follow exactly, with respect to any policy. As a running example in Fig.~\ref{fig:running_example} (b), our RRT* reward is robust such that the robot is not required to strictly follow all $r$-balls of the optimal path.
Instead, in order to receive the positive reward, the robot only needs to move towards the destination and pass through the partial $r$-balls $Ball_{r}(x_{i}|\boldsymbol{x}^*), i\in\left\{0,1\ldots,N_{p}\right\}$ along the optimal path. If we want the robot to reach the goal with desired orientations, we can add another reward that measures the errors between the robot's actual and desired orientations.
\end{rem}

By applying the reward design~\eqref{eq:reward_function}, we formally verify the performance of the reward~\eqref{eq:reward_function} for the reach-avoid task $\phi_{p}$.

\begin{thm}
\label{thm:RRT*}
If Assumption 1 holds, by selecting $R_{++}$ to be sufficiently larger than $R_{+}$, i.e., $R_{++} \gg R_{+}$, any algorithm that optimizes the expected return $J(\pi)$ is guaranteed to find the optimal policy $\pi^{*}$ satisfying the goal-reaching task $\phi_{p}$,
i.e., $\Pr{}_{M}^{\pi^{*}}(\phi_{p})>0$.
\end{thm}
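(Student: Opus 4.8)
The plan is to argue that under Assumption~1 the reward scheme~\eqref{eq:reward_function} makes reaching the goal region the unique way to accumulate the dominant reward term $R_{++}$, so that any return-maximizing policy must satisfy $\phi_p$ with positive probability. First I would fix the optimal geometric trajectory $\boldsymbol{x}^* = x_0 x_1 \ldots x_{N_p}$ produced by geometric RRT*, which exists because $V$ intersects the goal region, and recall that by Assumption~1 there is at least one policy $\pi$ that keeps the robot in $X_{free}$ and steers it toward the regions of interest. The key structural observation is that the product-state construction $s^{\times}_t = (s_t, i_t)$, with $i_t$ tracking the index of the deepest visited $r$-ball, renders $R$ Markovian on the product space (as established in Appendix~\ref{append:product}): the condition $D(Proj(s_t)) < D_{min}(\boldsymbol{s}_{t-1})$ is equivalent to a condition on $D(Proj(s_t))$ and the embedded index $i_{t-1}$, so the reward at step $t$ depends only on $s^{\times}_t$ (and $s^{\times}_{t-1}$). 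Hence $J(\pi)$ is a well-defined discounted objective over the product cl-MDP, and standard DRL convergence guarantees the existence of an optimal $\pi^*$.

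Next I would lower-bound the return of a good policy. Since the $r$-balls along $\boldsymbol{x}^*$ are disjoint (using $r \le \eta/2$) and the distance values $Dist(x_i|\boldsymbol{x}^*)$ are strictly decreasing in $i$, a trajectory that passes through the balls in increasing index order collects $R_{+}$ at most $N_p$ times and then $R_{++}$ upon entering $Ball_r(x_{N_p}|\boldsymbol{x}^*)$ where $D = 0$. By Assumption~1 a policy tracking $\boldsymbol{x}^*$ (approximately — the robot only needs to hit a subsequence of the balls, per the Remark) reaches the destination in finitely many steps, say by time $T$, while staying in $X_{free}$ and thus never incurring $R_{-}$; its discounted return is therefore at least $\gamma^{T} R_{++} - (\text{bounded negative contribution}) > 0$ for $R_{++}$ large. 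Conversely, I would upper-bound the return of any policy that reaches the goal with probability zero: such a policy never triggers the $R_{++}$ case on any admissible path, so along every path it collects at most the $R_{+}$ term finitely many times (at most $N_p$, since $D_{min}$ is non-increasing, integer-valued on visited balls, and bounded below by $1$), plus possibly negative $R_{-}$ terms; its return is bounded above by $\sum_{k=0}^{N_p-1}\gamma^{k}R_{+} \le \frac{N_p R_{+}}{1-\gamma}$, a constant independent of $R_{++}$.

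Finally I would choose $R_{++}$ large enough — concretely $R_{++} > \gamma^{-T}\!\left(\frac{N_p R_{+}}{1-\gamma} + |R_{-}|\cdot C\right)$ for the relevant bound $C$ — so that the return of the Assumption~1 policy strictly exceeds the supremum of returns over all policies that fail $\phi_p$ almost surely. Therefore the optimal policy $\pi^*$ maximizing $J(\pi)$ cannot be one that reaches the goal with probability zero; it must satisfy $\Pr{}_M^{\pi^*}(\phi_p) > 0$. I expect the main obstacle to be making the ``policy that approximately tracks $\boldsymbol{x}^*$ reaches the goal in finite time'' step rigorous: geometric RRT* ignores the dynamics, so one cannot literally follow $\boldsymbol{x}^*$, and the argument must lean on Assumption~1 together with the robustness of the reward (only a subsequence of $r$-balls must be visited) to guarantee that the $R_{++}$ event has positive probability under \emph{some} policy — and then on the optimality of $J$ to transfer this to $\pi^*$. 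A secondary subtlety is ensuring the comparison bounds are uniform over the (possibly uncountable) set of admissible paths, which is handled by the finiteness of $N_p$ and the discounting.
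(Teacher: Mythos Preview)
Your proposal is correct and follows essentially the same contradiction/comparison argument as the paper: upper-bound the return of any non-satisfying policy by the finite sum $\sum_{k=0}^{N_p-1}\gamma^{k}R_{+}$, lower-bound the return of a satisfying policy (whose existence is Assumption~1) by $\gamma^{T}R_{++}$, and pick $R_{++}$ large enough. The only cosmetic differences are that the paper phrases the upper bound via the transient/recurrent decomposition of the induced Markov chain (noting that a non-satisfying policy has zero reward on its recurrent classes) and, for the lower bound, lets the robot \emph{stay} at the destination to collect $R_{++}$ infinitely often, yielding the extra $\tfrac{1}{1-\gamma}$ factor; your single-hit lower bound $\gamma^{T}R_{++}$ is looser but equally sufficient.
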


The proof is presented in Appendix~\ref{proof:thm1}.
Theorem~\ref{thm:RRT*} provides a theoretical guarantee for the optimization performance, allowing us to apply practical algorithms to find the approximated optimal policy in continuous space. 

Based on Theorem~\ref{thm:RRT*} and regarding the continuous control task, we apply advanced DRL methods, e.g., actor-critic algorithms~\cite{lillicrap2015continuous, schulman2017proximal, haarnoja2018soft}, to find the optimal policy $\pi^*$.
Consider a policy $\pi_{\theta}(a|s^{\times})$, parameterized by $\theta$, the learning objective aims to find the optimal policy via optimizing the parameters $\theta$ and maximizing the expected discount return $J(\theta)=\mathbb{E}^{\pi_{\theta}}\left[\stackrel[t=0]{\infty}{\sum}\gamma^{t}\cdot R(s^{\times}_{t})\right]$, which minimizes the loss function:
\begin{equation}
    \mathcal{L}(\theta)=\mathbb{E}_{(s^{\times}, a, r, s'^{\times})\backsim\mathcal{D}}[(Q(s,a|\omega)-y)^{2}]\label{eq:loss},
\end{equation}
where $\mathcal{D}$ is the reply buffer that stores experience tuples $(s^{\times}, a, R, s'^{\times})$, $Q(s,a|\omega)$ is the state-action valuation function parameterized by $\omega$, and $y=r+\gamma\, Q(s'^{\times},a'|\omega)$. As observed in~\eqref{eq:loss}, actor-critics rely on effective data in the replay buffer, or sample efficiency of the state distribution to minimize the loss function. Due to its high reward density over geometric space, the sampling-based reward is easy to explore and improves the training performance using noisy policies.



\begin{thm}
\label{thm:DPG_RRT*}
If Assumption 1 holds,  by selecting $R_{++}$ to be sufficiently larger than $R_{+}$, i.e., $R_{++} \gg R_{+}$, a suitable DPG algorithm that optimizes the expected return $J(\theta)$, finds the optimal parameterized policy $\pi^{*}_{\theta}$ satisfying the LTL tasks $\phi_{p}$, i.e., $\Pr{}_{M}^{\pi^{*}}(\phi_{p})>0$ in the limit.
\end{thm}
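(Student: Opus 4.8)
The plan is to bootstrap Theorem~\ref{thm:DPG_RRT*} directly from Theorem~\ref{thm:RRT*}, which already establishes that \emph{any} return-maximizing policy satisfies $\phi_p$ with positive probability; the only new content here is that a concrete deep policy-gradient (DPG) scheme attains that optimum \emph{in the limit} of training. So the first step is to recall the standard convergence guarantees for the class of actor--critic / DPG algorithms cited (e.g.\ \cite{lillicrap2015continuous, schulman2017proximal}): under the usual stochastic-approximation hypotheses (Robbins--Monro step sizes, sufficiently rich function approximators, persistent exploration via noisy policies, bounded rewards), the critic estimate $Q(\cdot,\cdot|\omega)$ converges to the true action-value function of the current policy, and the parameterized policy $\pi_\theta$ converges to a stationary point of $J(\theta)$ that is a (local/global, depending on the stated assumptions) maximizer. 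The reward~\eqref{eq:reward_function} is bounded (it takes only the four values $R_-,R_{++},R_+,0$), the product state space keeps the reward Markovian (Appendix~\ref{append:product}), and Assumption~1 guarantees the exploration noise can reach the goal-containing $r$-balls with nonzero probability, so these hypotheses are met.

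Second, I would invoke Theorem~\ref{thm:RRT*}: since the DPG algorithm is a ``suitable'' algorithm that optimizes $J(\theta)$, and $R_{++}\gg R_+$, the argument in Appendix~\ref{proof:thm1} shows the optimal value is achieved only by policies whose induced paths pass through the terminal $r$-ball $Ball_r(x_{N_p}|\boldsymbol{x}^*)$ (where $D=0$) with positive probability — otherwise the boosted reward $R_{++}$ is never collected and the discounted return is strictly dominated by a policy that does reach it. Reaching that ball means the trace satisfies $\Eventually\mathcal{G}_i$ while the accumulated negative rewards $R_-$ being avoided forces $\Always\lnot\mathcal{O}$ along the satisfying runs; hence $\Pr{}_M^{\pi^*}(\phi_p)>0$. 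Combining this with the convergence statement of step one yields that the limit policy $\pi^*_\theta$ produced by the DPG algorithm satisfies $\phi_p$.

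The main obstacle — and the place where the proof is genuinely delicate rather than a citation chain — is the gap between ``converges to a maximizer of $J(\theta)$'' and ``the maximizer within the parameterized class $\{\pi_\theta\}$ is good enough to satisfy $\phi_p$.'' Theorem~\ref{thm:RRT*} is about the true optimal policy over all policies on the product MDP; a neural-network policy class need not contain it, and gradient methods generally only guarantee stationarity, not global optimality. I would address this by making the same tacit assumption the literature uses here: the function approximator is expressive enough that $\max_\theta J(\theta)$ equals the unconstrained optimum $\max_\pi J(\pi)$ (or at least exceeds the threshold below which $R_{++}$ is never collected), and exploration is persistent enough to escape spurious stationary points. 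Under that proviso the limit policy inherits the positivity $\Pr{}_M^{\pi^*}(\phi_p)>0$ from Theorem~\ref{thm:RRT*}. I would also note the qualifier ``in the limit'' absorbs the asymptotic nature of the stochastic-approximation convergence, so no finite-sample claim is being made. I expect the write-up to be short, essentially: (i) cite DPG convergence to the $J$-optimum in the expressive-approximator regime, (ii) apply Theorem~\ref{thm:RRT*} to that optimum, (iii) conclude.
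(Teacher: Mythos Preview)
Your proposal is correct and follows essentially the same approach as the paper: the paper's entire justification for Theorem~\ref{thm:DPG_RRT*} is the single sentence ``Theorem~\ref{thm:DPG_RRT*} is an immediate result of Theorem~\ref{thm:RRT*} and the nature of nonlinear regressions in deep neural networks,'' so your plan to (i) cite DPG convergence, (ii) apply Theorem~\ref{thm:RRT*}, (iii) conclude is exactly what the paper does, only more carefully spelled out. Your explicit identification of the expressiveness/stationarity gap is a point the paper simply absorbs into the phrase ``nature of nonlinear regressions'' and the qualifier ``suitable DPG algorithm,'' so you are not missing anything the paper provides.
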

Theorem~\ref{thm:DPG_RRT*} is an immediate result of Theorem~\ref{thm:RRT*} and the nature of nonlinear regressions in deep neural networks. In practice, the number of episodes and steps are limited and training has to be stopped eventually.

\section{LTL Task Satisfaction}
\label{sec:LTL_Solution}
 Sec.~\ref{subsec:TL-RRT*} describes how to generate and decompose the optimal path of satisfying general LTL task $\phi$ in a sequence of paths of completing goal-reaching missions $\phi_{p}$, and Sec.~\ref{subsec:DPGs} explains how to integrate distributed DPGs with the novel exploration guidance of Sec.~\ref{sec:exploration_Solution}, to learn the optimal policy.


\subsection{Geometric TL-RRT*\label{subsec:TL-RRT*}}

Due to the unknown dynamic system, we define the transition system over the geometric space $X$, referred as Geometric-Weighted Transition System (G-WTS).

\begin{defn}
\label{def:WTS}
A G-WTS of $Env$ is a tuple $\mathcal{T}=(X, x_{0}, \rightarrow_{\mathcal{T}}, \AP, L_{X}, C_{\mathcal{T}})$, where $X$ is the geometric space of $Env$, $x_{0}$ is the initial state of robot; $\rightarrow_{\mathcal{T}}\subseteq X\times X$ is the geometric transition relation s.t. $x\rightarrow_{\mathcal{T}}x'$ if $dist(x, x')\leq\eta$ and the straight line $\sigma$ connecting $x$ to $x_{new}$ is collision-free, $\AP$ is the set of atomic propositions as the labels of regions, $L_X: X\rightarrow\AP$ is the labeling function that returns an atomic proposition satisfied at a location $x$, and $C_{\mathcal{T}}: (\rightarrow_{\mathcal{T}})\rightarrow\mathbb{R}^{+}$ is the geometric Euclidean distance, i.e., $C_{\mathcal{T}}(x,x')=dist(x,x'),\forall (x,x')\in\rightarrow_{\mathcal{T}}$.
\end{defn}

The standard WTS~\cite{kloetzer2008fully,luo2021abstraction} defines the transition relations  $x\rightarrow_{\mathcal{T}}x'$ according to the existence of model-based controllers that drive the robot between neighbor regions $x, x'$. Differently, we only consider the geometric connection among states in a model-free manner.

Let $\tau_{\mathcal{T}}=x_{0}x_{1}x_{3}\dots$ denote a valid run of $\mathcal{T}$. An LTL formula $\phi$ can be converted into a Non-deterministic B\"uchi Automata (NBA) to verify its satisfaction.

\begin{defn}
\label{def:NBA}
\cite{vardi1986automata} An NBA over $2^{\AP}$ is a tuple $\mathcal{B}=(Q, Q_{0}, \Sigma, \rightarrow_{\mathcal{B}}, Q_{F})$, where $Q$ is the set of states, $Q_{0}\subseteq Q$ is the set of initial states, $\Sigma=2^{\AP}$ is the finite alphabet, $\rightarrow_{\mathcal{B}}\subseteq Q\times\Sigma\times Q$ is the transition relation, and $Q_{F}\subseteq Q$ is the set of accepting states.
\end{defn}
A valid infinite run $\tau_{\mathcal{B}}=q_{0}q_{1}q_{2}\ldots$ of $\mathcal{B}$ is called accepting, if it intersects with $Q_{F}$ infinite often. Infinite words $\tau_{o}=o_{0}o_{1}o_{2}\ldots, \forall o\in 2^{\AP}$ generated from an accepting run satisfy the corresponding LTL formula $\phi$. An LTL formula is converted into NBA using the tool~\cite{gastin2001fast}. As in~\cite{kantaros2020stylus}, we prune the infeasible transitions of the resulting NBA to obtain the truncated NBA.

\begin{defn}
\label{def:PBA}
Given the G-WTS $\mathcal{T}$ and the NBA $\mathcal{B}$, the product B\"uchi automaton (PBA) is a tuple $P=\mathcal{T}\times\mathcal{B}=(Q_P, Q^{0}_{P}, \rightarrow_{P}, Q^{F}_{P}, C_{P}, L_{P})$, where $Q_{P}=X\times Q$ is the set of infinite product states, $Q^{0}_{P}=x_{0}\times Q_{0}$ is the set of initial states; $\rightarrow_{P}\subseteq Q_{P}\times 2^{\AP}\times Q_{P}$ is the transition relation defined by the rule: $\frac{x\rightarrow_{\mathcal{T}}x'\land\text{ } q\overset{L_X(x)}{\rightarrow_{\mathcal{B}}}q'}{q_{P}=(x,q)\rightarrow_{P} q_{P}'=(x',q')}$, where $q_{P}\rightarrow_{P} q_{p}'$ denotes the transition $(q_{P},q_{P}')\in\rightarrow_{P}$, $Q^{F}_{P}=X\times Q_{F}$ is the set of accepting states, $C_{P}\colon (\rightarrow_{P})\rightarrow\mathbb{R}^{+}$ is the cost function defined as the cost in the geometric space, e.g., $C_{P}(q_{p}=(x,q),q_{p}'=(x',q'))=C_{\mathcal{T}}(x,x'), \forall (q_{P},q_{P}')\in\rightarrow_{P}$, and $L_{P}\colon Q_{P}\rightarrow\AP$ is the labelling function s.t. $L_P(q_{P})=L_X(x), \forall q_{P}=(x,q)$.
\end{defn}

A valid trace $\tau_{P}=q^{0}_{P}q^{1}_{P}q^{2}_{P}\ldots$ of a PBA is called accepting, if it visits $Q^{F}_{P}$ infinitely often, referred as the acceptance condition. Its accepting words $\tau_{o}=o_{0}o_{1}o_{2}\ldots, \forall o_{i}=L_{P}(q^{i}_{P})$ satisfy the corresponding LTL formula $\phi$. Let $\tau_{F}$ denote an accepting trace, and $proj|_{X}: Q_{P}\rightarrow X$ is a function that projects product state space into the workspace, i.e., $proj|_{X}(q_{p})=x, \forall q_{P}=(x,q)$. Using the projection, we extract the geometric trajectory $\tau_{\mathcal{T}}=proj|_{X}(\tau_{F})$ that satisfies the LTL formula.
More details are presented in~\cite{baier2008}. Therefore, the planning objective is to find an acceptable path $\tau_{P}$ of PBA, with minimum cumulative geometric cost $C_{P}$.

However, the state space of G-WTS and PBA are both infinite. Consequently, we are not able to apply a graph search method to a PBA with infinite states.
Thanks to the TL-RRT* algorithm~\cite{luo2021abstraction} for providing an abstraction-free method, it allows us to incrementally build trees that explore the product state-space and find the feasible optimal accepting path. The procedure applies the sampling-based method over the PBA, and is inspired by the fact that the accepting run $\tau_{F}$ admits a lasso-type sequence in the form of prefix-suffix structure, i.e., $\tau_{F}=\tau^{pre}_{P}[\tau^{suf}_{P}]^{\omega}$, where the prefix part $\tau^{pre}_{P}=q^{0}_{P}q^{1}_{P}\ldots q^{K}_{P}$ is only executed once, and the suffix part $\tau^{suf}_{P}=q^{K}_{P}q^{K+1}_{P}\ldots q^{K+M}_{P}$ with $q^{K}_{P}=q^{K+M}_{P}$ is executed infinitely often.

Following this idea, we build the trees for the prefix and suffix paths, respectively. To satisfy the acceptance condition, the set of goal states of the prefix tree $G^{pre}_{P}=(V^{pre}_{P},E^{pre}_{P})$ is defined as $Q^{pre}_{goal} = \left\{q_{P}=(x,q)\in X_{free}\times Q\subseteq Q_{P}\mid q\in Q_{F}\right\}$, where $X_{free}$ is the collision-free geometric space. After obtaining the prefix tree, we construct the set $Q^{*}_{goal}=V^{pre}_{P}\cap  Q^{pre}_{goal}$, and compute the optimal prefix path $\tau^{*}_{pre}$ reaching a state $q^{*}_{P}\in Q^{*}_{goal}$ from the root $q^{0}_{P}$. The suffix tree $G^{suf}_{P}=(V^{suf}_{P},E^{suf}_{P})$ is built by treating $q^{*}_{P}=(x^{*}, q^{*})$ as the root, and its goal states are:
\begin{equation*}
\begin{array}{c}
    Q^{suf}_{goal}(q^{*}_{P})=\left\{\right.q_{P}=(x,q)\in X_{free}\times Q\subseteq Q_{P}\mid \\
    x\rightarrow_{\mathcal{T}}x^{*}\land\text{ } q\overset{L_X(x)}{\rightarrow_{\mathcal{B}}}q^{*}\left\}\right..
\end{array}
\end{equation*}
$Q^{suf}_{goal}(q^{*}_{P})$ collects all states that can reach the state $q^{*}_{P}$ via one transition, and this way it ensures the feasible cyclic path matching the suffix structure. Finally, we search the optimal suffix path $\tau^{*}_{suf}$, by constructing $V^{suf}_{P}\cap Q^{suf}_{goal}$.

\begin{algorithm}[tb]
   \caption{ LTL-RRT*-Distributed DPGs}
   \label{alg:framework}
\begin{algorithmic}[1]
\State \textbf{Input:} $Env$, $\phi=\Always\lnot\mathcal{O}\land\phi_{g}$, Black-box $\mathcal{S}$;
\State \textbf{Initialize:} Geometric space $X$, Primitives of TL-RRT*; 
\State Convert $\phi$ into NBA $\mathcal{B}$
\State Build the incremental trees for PBA geometrically, based on definition~\ref{def:WTS} and definition~\ref{def:PBA}
\State Generate the optimal trajectory $\tau^{*}_{F}=\tau^{*}_{pre}[\tau^{*}_{suf}]^{\omega}$
\State Reformulate the trajectory into the modular form
\vspace{-6pt}
\begin{equation*}
    \mathcal{R}_{F}=(\mathcal{R}_{0}\mathcal{R}_{1}\ldots \mathcal{R}_{K})(\mathcal{R}_{K+1}\ldots \mathcal{R}_{K+l})^{\omega}\label{eq:lasso}
\end{equation*}
\vspace{-20pt}
\For{$i=1,\ldots,K+l$}
\State Construct the RRT* reward based     on~\eqref{eq:reward_function} for $\mathcal{R}_{i}$
\State Assign an actor-critic DPG e.g., DDPG, PPO, for $\mathcal{R}_{i}$
\EndFor

\State Assign the rewards~\eqref{eq:reward_function} and DPGs for each $\mathcal{R}_{i}$.
\State  Train the distributed DPGs in parallel
\State Extract the optimal policy $\pi^{*}_{i}$ from each DPG $\mathcal{R}_{i}$
\State Concatenate all optimal policies in the form
\vspace{-6pt}
\begin{equation*}
\pi^{*}_{\theta}=(\pi^{*}_{0}\pi^{*}_{1}\ldots \pi^{*}_{K})(\pi^{*}_{K+1}\ldots \pi^{*}_{K+l})^{\omega}\label{eq:lasso_policy}
\end{equation*}

\end{algorithmic}
\vspace{-0.2cm}
\end{algorithm}

\subsection{Distributed DPGs\label{subsec:DPGs}}

In this section, we first employ the optimal geometric path $\tau^{*}_{F}=\tau^{*}_{pre}[\tau^{*}_{suf}]^{\omega}$ from Sec.~\ref{subsec: RRT*}, to propose a distributed reward scheme. Since the policy gradient strategy suffers from the variance issue and only finds the optimal policy in the limit (see Theorem~\ref{thm:DPG_RRT*}), instead of directly applying the reward design~\eqref{eq:reward_function} for the whole path $\tau^{*}_{F}$, we decompose it into sub-tasks.
To do so, we divide $\tau^{*}_{F}$ into separated consecutive segments, each of which shares the same automaton components, i.e., $\tau^{*}_{F}=\tau^{*}_{0}\tau^{*}_{1}\ldots\tau^{*}_{K}[\tau^{*}_{K+1}\ldots \tau^{*}_{K+l}]^{\omega}$ such that all states of each sub trajectory $\tau^*_i$ have the same automaton components.
Each segment can be treated as a collision-free goal-reaching problem, denoted as $\mathcal{R}_{i}(\mathcal{G}_i,\mathcal{O})$, where $\mathcal{G}_i$ is label of the $i^{th}$ goal region.
Specifically, suppose the state trajectory of each $\mathcal{R}_{i}(\mathcal{G}_i, \mathcal{O})$ is $\tau^{*}_{i}=q^{0}_{P,i}q^{1}_{P,i}\ldots q^{N_{i}}_{P,i}$, we select the region labeled as $L_{P}(q^{N_{i}}_{P,i})$ containing the
geometric state $proj|_{X}(q^{N_{i}}_{P,i})$.

We show an example of the optimal decomposition in Fig.~\ref{fig:decomposition_example}, where the LTL task $\phi_{1,inf}=\Always\lnot\mathcal{O}\land\phi_{g_{1}}$ over infinite horizons with $\phi_{g_{1}}=\Always\Eventually\mathcal{G}_{1}\land\Always\Eventually\mathcal{G}_{2}\land\Always\Eventually\mathcal{G}_{3}$ that requires to infinitely visit goal regions labeled as $\mathcal{G}_{1}, \mathcal{G}_{2}, \mathcal{G}_{3}$. 
The resulting truncated NBA and decomposed trajectories of TL-RRT* are shown in Fig.~\ref{fig:decomposition_example} (a) and (b), respectively, where decomposed sub-paths are expressed as $\mathcal{R}_{F}=\mathcal{R}_{red}(\mathcal{R}_{blue}\mathcal{R}_{pink} \mathcal{R}_{brown})^{\omega}$, such that the distributed DPGs are applied to train the optimal sub-policies for each one in parallel.

The lasso-type optimal trajectory is reformulated as: $   \mathcal{R}_{F}=(\mathcal{R}_{0}\mathcal{R}_{1}\ldots \mathcal{R}_{K})(\mathcal{R}_{K+1}\ldots \mathcal{R}_{K+l})^{\omega}$.
For the cl-MDP $\mathcal{M}$, we treat each $\mathcal{R}_{i}$ as a task $\phi_{\mathcal{R}_{i}}=\Always\lnot\mathcal{O}\land\Eventually\mathcal{G}_{i}$ solved in Sec.~\ref{subsec: reward}.
In particular, we propose collaborative team of RRT* rewards in~\eqref{eq:reward_function} for each sub-task and assign distributed DPGs for each $\mathcal{R}_{i}$ that are trained in parallel.
After training, we extract the concatenate policy $\pi^{*}_{i}$ of each $\mathcal{R}_{i}$ to obtain the global optimal policy as $\pi^{*}_{\theta}=(\pi^{*}_{0}\pi^{*}_{1}\ldots \pi^{*}_{K})(\pi^{*}_{K+1}\ldots \pi^{*}_{K+l})^{\omega}$. The overall procedure is summarized in Alg.~\ref{alg:framework}, and a detailed diagram with rigorous analysis is presented in Appendix~\ref{append:diagram}. Based on the decomposition properties and Theorem~\ref{thm:DPG_RRT*}, we can conclude that the concatenated optimal policy of Alg.~\ref{alg:framework} satisfies the global LTL specification.

\begin{figure}
\begin{center}
\centerline{\includegraphics[width=\columnwidth]{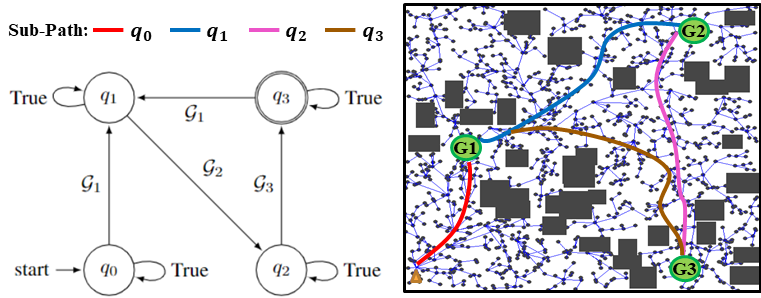}}
\caption{ Decomposition example. (Left) Truncated NBA $\mathcal{B}$ of the LTL formula $\phi_{g_{1}}=\Always\Eventually\mathcal{G}_{1}\land\Always\Eventually\mathcal{G}_{2}\land\Always\Eventually\mathcal{G}_{3}$ for $\phi_{1,inf}=\Always\lnot\mathcal{O}\land\phi_{g_{1}}$; (Right)  Decomposed sub goal-reaching tasks.}
\label{fig:decomposition_example}
\end{center}
\end{figure}

\section{Experimental Results\label{sec:experiment}}

We evaluate the framework on different nonlinear dynamic systems tasked to satisfy various LTL specifications.
The tests focus on large-scale cluttered environments that generalize the simple ones to demonstrate the method's performance.
Obstacles are randomly sampled.
We integrate all baselines with either DDPG or SAC as DPG algorithms.
Finally, we show that our algorithm improves the success rates of task satisfaction over both infinite and finite horizons in cluttered environments, and significantly reduces training time for the task over finite horizons.
Detailed descriptions of environments and LTL tasks will be introduced.

Recall that $\Eventually$ denotes the \emph{eventually} operator used to specify feasibility properties (e.g., goal-reaching), while $\Always$ stand for the \emph{always} operator used for safety (e.g., collision avoidance) and invariance (e.g., geo-fencing) properties.

\textbf{Baseline Approaches:}  We refer to our distributed framework as "RRT*" or "D-RRT*", and compare it against three baselines: (i) The TL-based rewards in ~\cite{hasanbeig2020deep,Cai2021modular} referred as "TL",  for the single LTL task,  have shown excellent performance in non-cluttered environments, which generalizes the cases of finite horizons in existing literature~\cite{Li2019,vaezipoor2021ltl2action, icarte2022reward};
(ii) Similar as~\cite{long2018towards, dawson2022learning}, for the goal-reading task $\phi$, the baseline referred to as "NED" designs the reward based on the negative Euclidean distance between the robot and destination; (iii) For a complex LTL task, instead of decomposition, this baseline directly apply the reward scheme~\eqref{eq:reward_function} for the global trajectory $\tau^{*}_{F}=\tau^{*}_{pre}[\tau^{*}_{suf}]^{\omega}$ referred as "G-RRT*".  Note that we focus on comparing the baseline "NED" for finite-horizon tasks and the baseline "G-RRT*" for infinite-horizon and complex tasks.

\begin{figure}
\begin{center}
\centerline{\includegraphics[width=\columnwidth]{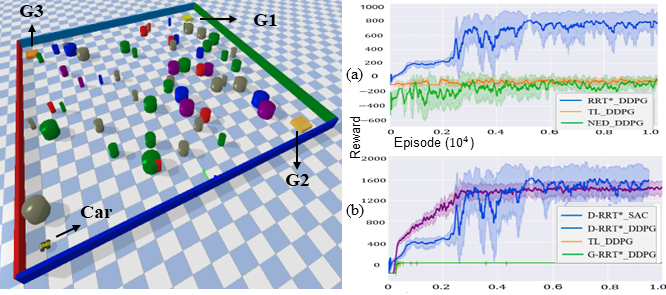}}
\caption{ Baselines comparison of tasks $\phi_{2,fin}$ (a) and $\phi_{2,inf}$ (b) in the Pybullet environment.}
\label{fig:Car_model}
\end{center}
\end{figure}

\begin{figure}
\begin{center}
\centerline{\includegraphics[width=\columnwidth]{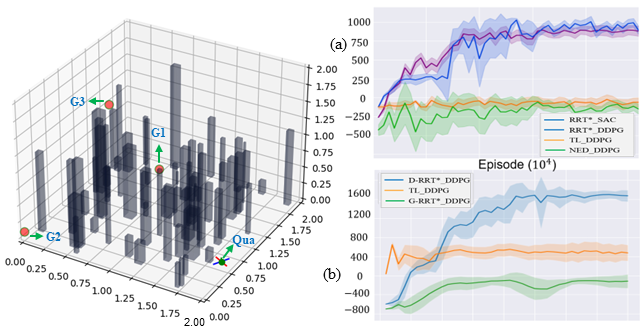}}
\caption{ Baselines comparison of tasks $\phi_{1,fin}$ in (a) and $\phi_{1,inf}$  in (b) of the cluttered 3D quadrotor environment.}
\label{fig:Quadrator}
 \vspace{-1.0cm}
\end{center}
\end{figure}

\textbf{6.1 Autonomous Vehicle\text{ }} We first implement the car-like model of Pybullet\footnote{\url{https://pybullet.org/wordpress/}} physical engine shown in Fig.~\ref{fig:Car_model}. We consider the sequential LTL task and surveillance LTL task over both finite and infinite horizons as $\phi_{2,fin}=\Always\lnot\mathcal{O}\land\Eventually(\mathcal{G}_{1}\land\Eventually(\mathcal{G}_{2}\land\Eventually(\mathcal{G}_{3}\land\Eventually\mathcal{G}_{init})))$ 
and 
$\phi_{2,inf}=\Always\lnot\mathcal{O}\land\Always\Eventually(\mathcal{G}_{1}\land\Eventually(\mathcal{G}_{2}\land\Eventually(\mathcal{G}_{3}))$, 
respectively, where $\phi_{2,fin}$ requires the vehicle to visit goal regions labeled as $\mathcal{G}_{1}, \mathcal{G}_{2}, \mathcal{G}_{3}$ and initial position sequentially, and $\phi_{1,inf}$ requires to visit initial positions and other goal regions infinitely often. Fig.~\ref{fig:Car_model} (a) and (b) show the learning curves of task $\phi_{2,fin}$ and $\phi_{2,inf}$, respectively, compared with different baselines. We can observe that our framework can be adopted in both DDPG and SAC to provide better performance  than other baselines in cluttered environments.


\textbf{6.2 Quadrotor Model\text{ }} We implement our algorithms in a $3$D environment with Quadrotor\footnote{\url{https://github.com/Bharath2/Quadrotor-Simulation/tree/main/PathPlanning}} dynamics shown in Fig.~\ref{fig:Quadrator}, which shows the capability of handling cluttered environments and high dimensional systems. We also test two types of LTL specifications as $\phi_{1,fin}=\Always\lnot\mathcal{O}\land\Eventually\mathcal{G}_{1}\land\Eventually\mathcal{G}_{2}\land\Eventually\mathcal{G}_{3}$
 and $\phi_{1,inf}=\Always\lnot\mathcal{O}\land\Always\Eventually\mathcal{G}_{1}\land\Always\Eventually\mathcal{G}_{2}\land\Always\Eventually\mathcal{G}_{3}$. The learning results for these tasks are shown in Fig~\ref{fig:Quadrator} (a) and (b), respectively. 


Then, we increase the complexity by random sampling $12$ obstacle-free goal regions in the $3$D environment and set the rich specifications as
$\phi_{3,fin}=\Always\lnot\mathcal{O}\land(\left(\Eventually\mathtt{\mathcal{G}_{1}}\land\Eventually\mathtt{\left(\mathcal{G}_{2}\land\Eventually\mathtt{\ldots\land \Eventually\mathcal{G}_{12}}\right)}\right)$, and
$\phi_{3,inf}=\Always\lnot\mathcal{O}\land\Always\Eventually\mathcal{G}_{1}\land\Always\Eventually\mathcal{G}_{2}\ldots\land\Always\Eventually\mathcal{G}_{12}$. The results are shown in Fig.~\ref{fig:Baselines} (a) and (b), and we observe that the "TL" baseline is sensitive to the environments and has poor performances, and when the optimal trajectories become complicated in the sense of the complexity of LTL tasks, "G-RRT*" easily converges to the sub-optimal solutions.



\begin{figure}[!t]
\begin{center}
\centerline{\includegraphics[width=\columnwidth]{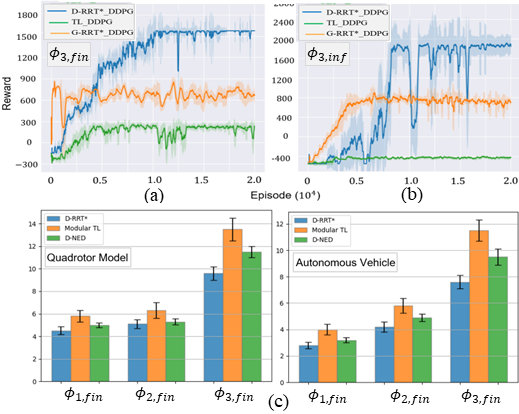}}
\caption{(a) (b) Results of baselines for more complex task $\phi_{3,fin}$ and $\phi_{3,inf}$ in cluttered $3$D environments; (c) Training time comparison for all tasks over finite horizons in both environments and dynamic systems.}
\label{fig:Baselines}
\end{center}
\end{figure}

\textbf{6.3 Performance Evaluation\text{ }}
Since our algorithm learns to complete the task faster, it terminates each episode during learning earlier for tasks over finite horizons. To illustrate the efficiency, we implement the training process $10$ times for tasks $\phi$, $\phi_{1,fin}$, $\phi_{2,fin}$, $\phi_{3,fin}$ in both cluttered environments and dynamics, and record the average training time compared with the baseline modular "TL"~\cite{Cai2021modular} and distributed "NED" (D-NED). The results in Fig.~\ref{fig:Baselines} (d) show that we have optimized the learning efficiency and the training time is reduced. In practice, we can apply distributed computing to train each local DPG of sub-tasks simultaneously for complicated global tasks to alleviate the training burden.

We statistically run $200$ trials of the learned policies for each sub-task and record the average success rates and training time of both models, i.e., autonomous vehicle and quadrotor. The results are shown in Table~\ref{tab:success_rate}. We see that in cluttered environments, success rates of other all baselines are $0$, and our method achieves success rates of near $100\%$. As a result, the effectiveness of the performance has been significantly improved under environmental challenges.

\begin{table}
	\caption{\label{tab:success_rate} Analysis of success rates and training time.}
	\centering{}\resizebox{0.45\textwidth}{!}{
		\begin{tabular}{|c|c|c|c|}
			\hline 
			LTL Task & Dynamic model & Baseline rate & Success rate\tabularnewline
			\hline 
			\multirow{2}{*}{$\phi_{1,fin}$} & \multirow{2}{*}{Quadrotor} & D-RRT* & $100\%$ \tabularnewline
			\cline{3-4} \cline{3-4} \cline{4-4}
			& & TL, G-RRT* &  $0\%$ \tabularnewline
			\hline 
			
			\multirow{2}{*}{$\phi_{2,fin}$} & \multirow{2}{*}{Vehicle} & D-RRT* & $100\%$ \tabularnewline
			\cline{3-4} \cline{3-4} \cline{4-4}
			& & TL, G-RRT* &  $0\%$ \tabularnewline
			\hline 
			
			\multirow{2}{*}{$\phi_{3,fin}$} & \multirow{2}{*}{Quadrotor} & D-RRT* & $100\%$ \tabularnewline
			\cline{3-4} \cline{3-4} \cline{4-4}
			& & TL, G-RRT* &  $0\%$ \tabularnewline
			\hline 
			
			\multirow{2}{*}{$\phi_{3,fin}$} & \multirow{2}{*}{Vehicle} & D-RRT* & $100\%$ \tabularnewline
			\cline{3-4} \cline{3-4} \cline{4-4}
			& & TL, G-RRT* &  $0\%$ \tabularnewline
			\hline 
			
			\multirow{2}{*}{$\phi_{1,inf}$} & \multirow{2}{*}{Quadrotor} & D-RRT* & $100\%$ \tabularnewline
			\cline{3-4} \cline{3-4} \cline{4-4}
			& & TL, G-RRT* &  $0\%$ \tabularnewline
			\hline 
			
			\multirow{2}{*}{$\phi_{2,inf}$} & \multirow{2}{*}{Vehicle} & D-RRT* & $100\%$ \tabularnewline
			\cline{3-4} \cline{3-4} \cline{4-4}
			& & TL, G-RRT* &  $0\%$ \tabularnewline
			\hline 
			
			\multirow{2}{*}{$\phi_{3,inf}$} & \multirow{2}{*}{Quadrotor} & D-RRT* & $100\%$ \tabularnewline
			\cline{3-4} \cline{3-4} \cline{4-4}
			& & TL, G-RRT* &  $0\%$ \tabularnewline
			\hline 
			
			\multirow{2}{*}{$\phi_{3,inf}$} & \multirow{2}{*}{Vehicle} & D-RRT* & $100\%$ \tabularnewline
			\cline{3-4} \cline{3-4} \cline{4-4}
			& & TL, G-RRT* &  $0\%$ \tabularnewline
			\hline 
			
	\end{tabular}}
\end{table}

\section{Conclusion\label{sec:conclusion}}

Applying DPG algorithms to cluttered environments produces vastly different behaviors and results in failure to complete complex tasks.
A persistent problem is the exploration phase of the learning process and the density of reward designs that limit its applications to real-world robotic systems. This paper provides a novel path-planning-based reward scheme to alleviate this problem, enabling significant improvement of reward performance and generating optimal policies satisfying complex specifications in cluttered environments. To facilitate rich high-level specification, we develop an optimal decomposition strategy for the global LTL task, allowing to train all sub-tasks in parallel and optimize the efficiency.
The main limitation of our approach is to generalize various environments from a distribution. Future works aim at shrinking the gap of sim-to-real. We will also consider safety-critical exploration during learning and investigate multi-agent systems.

\bibliographystyle{IEEEtran}
\bibliography{reference}

\newpage
\appendix

\subsection{Summary of Geometric RRT*\label{append:RRT*}}

Before discussing the algorithm in details, it is necessary to introduce few algorithmic primitives as follows:

{\em Random sampling:} The $Sample$ function provides independent, uniformly distributed random samples of states, from the geometric space $X$.

{\em Distance and cost:} The function $dist: X\times X\rightarrow [0,\infty)$ is the metric that returns the geometric Euclidean distance. The function $Cost: X\rightarrow [0,\infty)$ returns the length of the path from the initial state $x_{0}$ to the input state.

{\em Nearest neighbor:} Given a set of vertices $V$ in the tree $G$ and a state $x'$, the function $Nearest(V, x)$ generates the closest state $x\in V$ from which $x'$ can be reached with the lowest distance metric.

{\em Steering:} Given two states $x, x'$, the function $Steer$ returns a state $x_{new}$ such that $x_{new}$ lies on the geometric line connecting $x$ to $x'$, and its distance from $x$ is at most $\eta$, i.e., $dist(x, x')\leq\eta$, where $\eta$ is an user-specified parameter. In addition, the state $x_{new}$ must satisfy $dist(x_{new}, x')\leq dist(x, x')$. The function $Steer$ also returns the straight line $\sigma$ connecting $x$ to $x_{new}$.

{\em Collision check:} A function $CollisionFree(\sigma)$ that detects if a state trajectory $\sigma$ lies in
the obstacle-free portion of space $X$. $C(\sigma)$ is the distance of $\sigma$.

{\em Near nodes:} Given a set of vertices $V$ in the tree $G$ and a state $x'$, the function $Near(V, x')$ returns a set of states that are closer than a threshold cost to $x'$:
\\
\begin{equation*}
        Near(V, x')= \left\{x\in V:dist(x,x')\leq\kappa\left(\frac{\log n}{n}\right)^{1/d}\right\},
\end{equation*}
\\
where $n$ is the number of vertices in the tree, $d$ is the dimension of the geometric space, and $\kappa$ is a constant.

{\em Optimal Path:} Given two states $x, x'$ in $V$, the function $Path(x, x')$ returns a local optimal trajectory $\sigma$.

The Alg.~\ref{alg:RRT*} proceeds as follows. First, the graph $G$ is initialized with $V\leftarrow \{x_{0}\}$ and $E\leftarrow\emptyset$ (line~\ref{alg:rrt*:init}). Then a state $x_{rand}$ is sampled from $X$ of $Env$ (line~\ref{alg:rrt*:sample}), then, the nearest node $x_{nearest}$ in the tree is found (line~\ref{alg:rrt*:nearest}) and extended toward the sample, denoted by $x_{new}$, in addition to the straight line $\sigma_{new}$ connecting them (line~\ref{alg:rrt*:steer}). If line $\sigma_{new}$ is collision free (line~\ref{alg:rrt*:collision_free}), the algorithm iterates over all near neighbors of the state $x_{new}$ and finds the state $x_{min}$ that has the lowest cost to reach $x_{new}$ (lines~\ref{alg:rrt*:iterate}-~\ref{alg:rrt*:end_iterate}). Then the tree is updated with the new state (lines~\ref{alg:rrt*:add_node}-~\ref{alg:rrt*:add_edge}), and the algorithm rewires the near nodes, using Alg.~\ref{alg:rewire} (line~\ref{alg:rrt*:rewire}). Alg.~\ref{alg:rewire} iterates over the near neighbors of the new state $x_{new}$ and updates the parent of a near state $x_{near}$ to $x_{new}$ if the cost of reaching $x_{near}$ from $x_{new}$ is less than the current cost of reaching to $x_{near}$.

\begin{algorithm}[tb]
  \caption{Geometric RRT* ($(V,E)$, $N$)}
  \label{alg:RRT*}
\begin{algorithmic}[1] 
  \State {\bfseries Initialize:} $G=(V,E)$; $V\leftarrow \{x_{0}\}$, $E\leftarrow\emptyset$ \label{alg:rrt*:init}
  \For{$i=1,\ldots,N$ }
  \State $x_{rand}\leftarrow Sample$ \label{alg:rrt*:sample}
  \State $x_{nearest}\leftarrow Nearest(V$, $x_{rand})$ \label{alg:rrt*:nearest}
  \State $x_{new}, \sigma_{new}\leftarrow Steer(x_{nearest}$, $x_{rand})$ \label{alg:rrt*:steer}
  \If{$CollisonFree(\sigma_{new})$} \label{alg:rrt*:collision_free}
  \State $X_{near}\leftarrow Near(V, x_{new})$ \label{alg:rrt*:iterate}
  \State $c_{min}\leftarrow\infty$, $x_{min}\leftarrow\text{ NULL}$, $\sigma_{min}\leftarrow\text{ NULL}$
  \For{$x_{near}\in X_{near}$}
  \State $\sigma\leftarrow Path(x_{near}, x_{new})$
  \If {$Cost(x_{near})+C(z)<c_{min}$}
  \State $c_{min}\leftarrow Cost(x_{near})+Cost(\sigma)$
  \State $x_{min}\leftarrow x_{near}$; $\sigma_{min}\leftarrow \sigma$
  \EndIf
  \EndFor \label{alg:rrt*:end_iterate}
  \State $V\leftarrow V\cup\left\{x_{new}\right\}$ \label{alg:rrt*:add_node}
  \State $E\leftarrow E\cup\left\{(x_{min}, x_{new})\right\}$ \label{alg:rrt*:add_edge}
  \State $(V,E)\leftarrow$ Rewire$((V,E), X_{near}, x_{new})$ \label{alg:rrt*:rewire}
  \EndIf
  \EndFor
  \State \textbf{return} $G=(V,E)$
\end{algorithmic}
\end{algorithm}

\begin{algorithm}[tb]
  \caption{Rewire($(V,E)$, $X_{near}$, $x_{new}$)}
  \label{alg:rewire}
\begin{algorithmic}[1]
  \For{$x_{near}\in X_{near}$}
  \State $\sigma\leftarrow Path(x_{new}, x_{near})$
  \If {$Cost(x_{new})+C(\sigma)<Cost(x_{near})$}
  \If {$CollisonFree(x)$}
  \State $x_{parent}\leftarrow Parent(x_{near})$
  \State $E\leftarrow E\setminus\left\{(x_{parent}, x_{near})\right\}$
  \State $E\leftarrow E\cup\left\{(x_{new}, x_{near})\right\}$
  \EndIf
  \EndIf
  \EndFor
  \State \textbf{return} $G=(V,E)$
\end{algorithmic}
\end{algorithm}

\begin{figure*}
\vskip 0.1in
\begin{center}
\centerline{\includegraphics[scale=0.5 ]{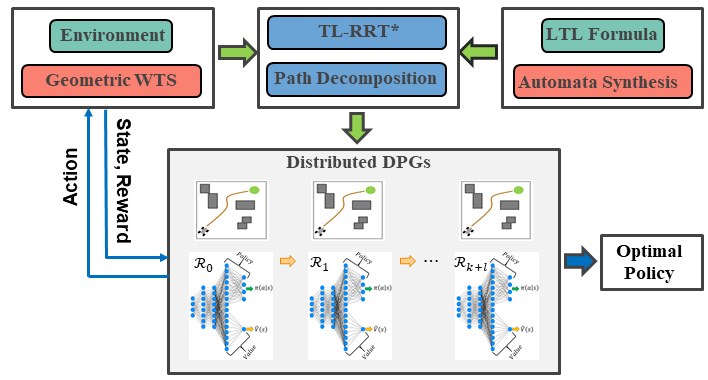}}
\caption{General diagram of the LTL-RRT*-Distributed DPGs method explained in Alg.~\ref{alg:framework}.}
\label{fig:diagram}
\end{center}
\vspace{-0.8cm}
\end{figure*}

\subsection{Analysis of Product State\label{append:product}}
In this section, we show that the product state can be applied with RL algorithms and the~\eqref{eq:reward_function} is a Markovian reward for the state $s^{\times}=(s_{t}, i_{t})$. Let $I=\left\{0,1\ldots,N_{p} \right\}$ be a set of all sequential indexes of the r-balls for an optimal trajectory $\boldsymbol{x}^*$, and define $f_{I}:\mathbb{R}\rightarrow I$ as function to return the index at each time $t$ s.t. $f_{I}(t)=i_{t}$, where the output of the function $f_{I}$ follows the requirements:
\begin{equation*}
\left\{ \begin{array}{cc}
i_{t}=0, \text{ if none of the r-balls are visited up to } t,\\
x_{i_{t}}=Proj(s_{i_{t}}), \text{ where } s_{i_{t}}=\underset{s\in \boldsymbol{s}_{t}}{\arg\min}\left\{D(Proj(s))\right\}.\\
\end{array}\right.
\end{equation*}
Consequently, given an input $t$, the $f_{I}$ generates a deterministic output such that we regard the tuple $(I, f_{I}, i_{0})$ as a deterministic automaton without accepting states~\cite{baier2008}, where $i_{0}=0$. The state $s^{\times}_{t}$ is derived from a product structure defined as:

\begin{defn}
The product between cl-MPD $\mathcal{M}=(S, S_{0}, A,p_{S},\AP,L, R, \gamma)$ and $(I, f_{I}, i_{0})$ is a tuple $\mathcal{M}^{\times}=(S^{\times}, S^{\times}_{0}, A ,p^{\times}_{S},\AP, L^{\times}, R, \gamma)$, where $S^{\times}=S\times I$ is a set of product states; $S^{\times}_{0}=S_{0}\times i_{0}$ is a set of initial states;
$p^{\times}_{S}$ is the transition distribution s.t. for states $(s^{\times}=(s,i), (s'^{\times}=(s',i')$, $p^{\times}_{S}\left(s'^{\times}|s^{\times},a\right)>0$ iff $p_{S}\left(s'|s,a\right)>0, a\in A$ and $i'=f_{I}(i)$; $L^{\times}(s^{\times}=L(s), \forall s^{\times}=(s,i)$.
\end{defn}
According to~\cite{baier2008}, $\mathcal{M}^{\times}$ is a product MDP structure that allows using RL methods to find the optimal policy, where the state $i_{t}$ of $s^{\times}_{t}$ tracks the history of state trajectory $\boldsymbol{s}_{t-1}$ up to $t$. Consequently, the reward~\eqref{eq:reward_function} is a Markovian reward function.

\subsection{Theorem~\ref{thm:RRT*} Proof \label{proof:thm1}}

First, we introduce two relevant definitions:
\begin{defn}
\cite{baier2008}
A Markov
chain $MC_{\mathcal{P}}^{\pi}$ of the $\mathcal{M}$ is a sub-MDP of $\mathcal{M}$ induced by a policy $\pi$.
\end{defn}

\begin{defn}\label{def:induced_markov_chain}
\cite{Durrett1999} States of any Markov
chain $MC_{\mathcal{P}}^{\pi}$ under policy $\pi$ are represented by a disjoint union of a transient
class $T(\pi)$ and $n_R$ closed
irreducible recurrent classes $Re^{j}(\pi)$,
$j\in\left\{ 1,\ldots,n_{Re}\right\} $, where a class is a set of states. That is, for any policy $\pi$, one has
\begin{equation*}
    MC_{\mathcal{P}}^{\pi}=Tr(\pi)\sqcup Re^{1}(\pi)\sqcup Re^{2}(\pi)\sqcup\ldots Re^{n_{Re}(\pi)}.
\end{equation*}
\end{defn}

As discussed in \cite{Durrett1999}, for each state in recurrent class,
it holds that $\stackrel[n=0]{\infty}{\sum}p^{n}\left(s^{\times},s^{\times}\right)=\infty$,
where $s^{\times}\in\ensuremath{\ensuremath{R_{\pi}^{j}}}\cap F_{k}^{\mathcal{\mathcal{P}}}$
and $p^{n}\left(s^{\times},s^{\times}\right)$ denotes the probability of returning
from a transient state $s^{\times}$ to itself in $n$ steps. This means that
each state in the recurrent class occurs infinitely often.

Then, we prove it by the contradiction. Suppose we have a policy $\bar{\pi}$ that is optimal and does not satisfy go-reach task $\phi_{p}$, which means the robot derived by $\bar{\pi}$ will not reach the goal station. According to the reward design~\eqref{eq:reward_function}, the robot is only assigned repetitive reward $R_{++}$ when it reaches and stays at the destination s.t. $D(Proj(s_{t}))=0$. 
We have that rewards of states in recurrent classes are equal zero i.e., $R(s^{\times}_{t})= 0, \forall  s^{\times}_{t}\in Re^{j}(\bar{\pi}), \forall j\in\left\{ 1,\ldots,n_{Re}\right\}$. Recall that we have $N_{p}$ r-balls, and the best case of $\bar{\pi}$ is to consecutively pass all these balls sequentially without reaching destination. We obtain the upper-bound of $J(\bar{\pi})$ as:
\begin{equation}
    J(\bar{\pi})< R_{+}\cdot\frac{1-\gamma^{N_{p}}}{1-\gamma}\label{eq:upper}
\end{equation}
Per assumption 1, we can find another policy $\pi^{*}$ that reaches and stays at the destination s.t. $R(s'^{\times}_{t})=R_{++}, \forall  s'^{\times}\in Re^{j}(\pi^{*}), \forall j\in\left\{ 1,\ldots,n_{Re}\right\}$. The worst case of $\pi^{*}$ is to pass no r-balls in the class $Tr(\bar{\pi})$ and only reach the goal stations. We obtain the lower-bound of $J(\pi^{*})$ as:
\begin{equation}
    J(\pi^{*})\geq \underline{M}R_{++}\cdot\frac{1}{1-\gamma}\label{eq:lower},
\end{equation}
where $\underline{M}=\gamma^{\bar{n}}$, and $\bar{n}$ is maximum number of steps reaching the goal station region. Consequently, for~\eqref{eq:upper} and~\eqref{eq:lower}, if we select
\begin{equation}
    R_{++}\geq \frac{R_{+}\cdot(1-\gamma^{N_{p}})}{\underline{M}},
\end{equation}
we guarantee that $J(\pi^{*})>J(\bar{\pi})$, which
contradicts the fact that $\bar{\pi}$ is an optimal policy. This concludes the theorem. 

Note that, the above proof only considers the worst case of optimal policies satisfying goal-reaching task $\phi_{p}$. According to the design of the RRT* reward~\eqref{alg:RRT*}, in practice, we can achieve better convergence due to the high density of r-balls in geometric space.

\subsection{Diagram for Alg.~\ref{alg:framework}\label{append:diagram}}

Here we present a general diagram of our proposed method in Alg.~\ref{alg:framework}. From the given environment $Env$ with geometric space $X$, the G-WTS $\mathcal{T}$ is constructed. This transition system, together with the NBA $\mathcal{B}$ generated from the LTL task $\phi_{p2}$, are used to construct the PBA $P$. The TL-RRT* method is applied over $P$ to compute the optimal accepting run $\tau^*_F$ and the optimal trajectory $R^*_F$. Using the path decomposition with respect to the order of segments in \eqref{eq:lasso}, distributed DPGs are trained in parallel over the episodes, and the resulting optimal distributed policies are concatenated sequentially to satisfy the LTL formula in the form of $\phi_{p2}$.
Based on the decomposition properties, we have: 

\begin{lem}
\label{lem:D-RRT*}
If Assumption 1 holds,  by selecting $R_{++}$ to be sufficiently larger than $R_{+}$, i.e., $R_{++} \gg R_{+}$, Alg.~\ref{alg:framework} using a suitable DPG algorithm can generate the optimal policy $\pi^{*}_{\theta}=(\pi^{*}_{0}\pi^{*}_{i}\ldots \pi^{*}_{K})(\pi^{*}_{K+1}\ldots \pi^{*}_{K+l})^{\omega}$ satisfying the general LTL task $\phi_{p2}$, i.e., $\Pr{}_{M}^{\pi^{*}}(\phi_{p2})>0$
in the limit.
\end{lem}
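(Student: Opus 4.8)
The plan is to reduce Lemma~\ref{lem:D-RRT*} to Theorem~\ref{thm:DPG_RRT*} by exploiting the decomposition of the LTL task $\phi_{p2}$ into the finite family of reach-avoid sub-tasks $\{\phi_{\mathcal{R}_i}\}_{i=0}^{K+l}$ where $\phi_{\mathcal{R}_i}=\Always\lnot\mathcal{O}\land\Eventually\mathcal{G}_i$, together with the lasso structure $\tau^*_F=\tau^*_{pre}[\tau^*_{suf}]^\omega$ coming from the TL-RRT* accepting run. First I would recall that, by construction, each segment $\tau^*_i$ shares a common automaton component, so the endpoint label $\mathcal{G}_i=L_P(q^{N_i}_{P,i})$ is well defined and the concatenation $\mathcal{R}_0\mathcal{R}_1\ldots\mathcal{R}_K(\mathcal{R}_{K+1}\ldots\mathcal{R}_{K+l})^\omega$ of sub-runs is a valid accepting run of the PBA $P$; hence any trajectory of $\mathcal{M}$ that (i) always avoids $\mathcal{O}$ and (ii) visits $X_{\mathcal{G}_i}$ in the prescribed cyclic order satisfies $\phi_{p2}$ by the acceptance condition of $P$ and Definition~\ref{def:PBA}.

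Next I would invoke Theorem~\ref{thm:DPG_RRT*} on each sub-task: under Assumption~1 (which, restricted to the $i$-th segment, guarantees a policy driving the robot from the region labeled $\mathcal{G}_{i-1}$ to $\mathcal{G}_i$ inside $X_{free}$), and with $R_{++}\gg R_{+}$ in the reward~\eqref{eq:reward_function} attached to $\mathcal{R}_i$, a suitable DPG finds in the limit an optimal sub-policy $\pi^*_i$ with $\Pr{}_M^{\pi^*_i}(\phi_{\mathcal{R}_i})>0$, i.e., $\pi^*_i$ reaches $X_{\mathcal{G}_i}$ from the initial set of sub-task $i$ while remaining collision-free with positive probability. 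Because the sub-tasks are assigned independent DPGs trained in parallel on disjoint product-state segments, this step applies verbatim and simultaneously to all $i\in\{0,\ldots,K+l\}$.

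The remaining step is to show that the concatenated policy $\pi^*_\theta=(\pi^*_0\ldots\pi^*_K)(\pi^*_{K+1}\ldots\pi^*_{K+l})^\omega$ inherits satisfaction of $\phi_{p2}$. Here I would argue compositionally over the lasso: the prefix policies $\pi^*_0,\ldots,\pi^*_K$ are executed once in sequence, each handing off the robot inside $X_{\mathcal{G}_i}$ — which is precisely the initial region for the successor sub-task — so with positive probability the robot traverses $\mathcal{G}_0,\mathcal{G}_1,\ldots,\mathcal{G}_K$ collision-free; then the suffix block $\pi^*_{K+1}\ldots\pi^*_{K+l}$, by the choice of $Q^{suf}_{goal}(q^*_P)$ in the TL-RRT* construction, forms a cycle returning to $q^*_P$, so repeating it executes the surveillance part infinitely often, again with positive probability at each repetition; multiplying the (finitely many, each positive) handoff probabilities for one pass and noting that the suffix can be re-entered indefinitely shows $\Pr{}_M^{\pi^*_\theta}(\phi_{p2})>0$. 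The induced run of $P$ visits $Q^F_P$ infinitely often, so the LTL formula is satisfied.

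The main obstacle I anticipate is the hand-off / compositionality argument in the last paragraph: formally, each sub-policy $\pi^*_i$ is proven to reach $X_{\mathcal{G}_i}$ only from the specific initial set used during its training, so one must check that the terminal state distribution induced by $\pi^*_{i-1}$ is supported in (or can be made consistent with) that initial set — this is where the product-state embedding $s^\times=(s,i_t)$ and the switching rule between modular DPGs matter, and it is the point the diagram in Appendix~\ref{append:diagram} is meant to handle. A secondary subtlety is that "in the limit" optimality of each $\pi^*_i$ yields only positive (not probability-one) success, so the conclusion must remain $\Pr{}_M^{\pi^*}(\phi_{p2})>0$ rather than $=1$, and one should make sure the infinite product of suffix-loop success probabilities is not needed — only that each single loop closes with positive probability, which together with the Borel–Cantelli-type reasoning on recurrent classes from the proof of Theorem~\ref{thm:RRT*} suffices.
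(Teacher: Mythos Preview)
Your proposal is correct and follows essentially the same approach as the paper: use the TL-RRT* lasso decomposition of the accepting run into reach-avoid sub-tasks, apply Theorem~\ref{thm:DPG_RRT*} to each $\mathcal{R}_i$, and concatenate the resulting sub-policies. The paper's own proof is a two-sentence sketch that omits the hand-off/compositionality and suffix-recurrence points you spell out, so your treatment is a faithful (and more careful) elaboration rather than a different route.
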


\begin{proof}
Since the TL-RRT*~\cite{luo2021abstraction} has shown to find the optimal geometric path to satisfy $\phi_{p2}$, which is decomposed into sub-tasks in the form of $\mathcal{R}^{*}_{F}=(\mathcal{R}_{0}\mathcal{R}_{i}\ldots \mathcal{R}_{K})(\mathcal{R}_{K+1}\ldots \mathcal{R}_{K+l})^{\omega}$, from Theorem~\ref{thm:DPG_RRT*}, each optimal sub-policy $\pi^{*}_{i}$ achieves the sub-task of $\mathcal{R}_{i}$. This concludes Lemma~\ref{lem:D-RRT*}.
Note that the LTL formula $\phi$ is a special case of $\phi_{p2}$ that only has one DPG for training and is solved via Alg.~\ref{alg:framework}. 
\end{proof}


\subsection{Experimental Setting\label{append:experiment}}

All experiments are conducted on a 16GB computer using $1$ Nvidia RTX 3060 GPU.
In each experiment, the LTL tasks are converted into NBA using the tool:~\url{http://www.lsv.fr/~gastin/ltl2ba/}. The cl-MDP between the dynamic system and environmental geometric space is synthesized on-the-fly.
The parameters of the reward scheme are set as $R_{-}\in\left\{-100, -200\right\}$, $R_{++}=200$, $R_{+}=5$.

We run $10000$ episodes for the task $\phi_{2,inf}, \phi_{2,fin}$,  $20000$ episodes for the tasks $\phi_{1,inf}, \phi_{1,fin}, \phi_{3,inf}, \phi_{3,fin}$. 
Every episode has maximum $1000$ steps for tasks $\phi_{1,inf}, \phi_{1,fin}, \phi_{2,inf}, \phi_{2,fin}$, and each episode of the tasks $\phi_{3,inf}, \phi_{3,f}$ has maximum $1500$ steps.

As for each actor/critic structure, we use the same feed-forward neural network setting with 3 fully connected layers with $[64, 64, 64]$ units and the ReLU activation function. We use the implementations of DDPG and PPO for tuning parameters according to the OpenAI baselines:~\url{https://github.com/openai/baselines}. 
\end{document}